\documentclass{article}

\usepackage{microtype}
\usepackage{graphicx}
\usepackage{subcaption}
\usepackage{booktabs} 
\usepackage{tcolorbox}
\usepackage{tabularx}
\usepackage{hyperref}

\usepackage[preprint]{icml2026}

\usepackage{amsmath}
\usepackage{amssymb}
\usepackage{multirow}
\usepackage{mathtools}
\usepackage{amsthm}
\usepackage[table]{xcolor}

\usepackage[capitalize,noabbrev]{cleveref}

\theoremstyle{plain}
\newtheorem{theorem}{Theorem}[section]

\theoremstyle{definition}
\newtheorem{definition}[theorem]{Definition}

\theoremstyle{remark}
\newtheorem{remark}[theorem]{Remark}

\usepackage[textsize=tiny]{todonotes}

\icmltitlerunning{Beyond Statistical Co-occurrence: Unlocking Intrinsic Semantics for Tabular Data Clustering}

\begin{document}

\twocolumn[
  \icmltitle{ Beyond Statistical Co-occurrence: Unlocking Intrinsic \\ Semantics for Tabular Data Clustering}
  %



  \icmlsetsymbol{equal}{*}

  \begin{icmlauthorlist}
    \icmlauthor{Mingjie Zhao}{yyy}
    \icmlauthor{Yunfan Zhang}{yyy}
    \icmlauthor{Yiqun Zhang}{comp}
    \icmlauthor{Yiu-ming Cheung*}{yyy}
  \end{icmlauthorlist}

  \icmlaffiliation{yyy}{Department of Computer Science, Hong Kong Baptist University, Hong Kong, China,}
  \icmlaffiliation{comp}{Department of Computer Science, Guangdong University of Technology, Guangzhou, China}

  \icmlcorrespondingauthor{Yiu-ming Cheung}{ymc@comp.hkbu.edu.hk}

  \icmlkeywords{Machine Learning, ICML}

  \vskip 0.3in
]



\printAffiliationsAndNotice{}  

\begin{abstract}

Deep Clustering (DC) has emerged as a powerful tool for tabular data analysis in real-world domains like finance and healthcare. However, most existing methods rely on data-level statistical co-occurrence to infer the latent metric space, often overlooking the intrinsic semantic knowledge encapsulated in feature names and values. As a result, semantically related concepts like `Flu' and `Cold' are often treated as symbolic tokens, causing conceptually related samples to be isolated. To bridge the gap between dataset-specific statistics and intrinsic semantic knowledge, this paper proposes Tabular-Augmented Contrastive Clustering (TagCC), a novel framework that anchors statistical tabular representations to open-world textual concepts. Specifically, TagCC utilizes Large Language Models (LLMs) to distill underlying data semantics into textual anchors via semantic-aware transformation. Through Contrastive Learning (CL), the framework enriches the statistical tabular representations with the open-world  semantics encapsulated in these anchors. This CL framework is jointly optimized with a clustering objective, ensuring that the learned representations are both semantically coherent and clustering-friendly. Extensive experiments on benchmark datasets demonstrate that TagCC significantly outperforms its counterparts.

\end{abstract}

\section{Introduction}
\label{sct:intro}

\begin{figure}[!t]	
\centering
\centerline{\includegraphics[width=3.2in]{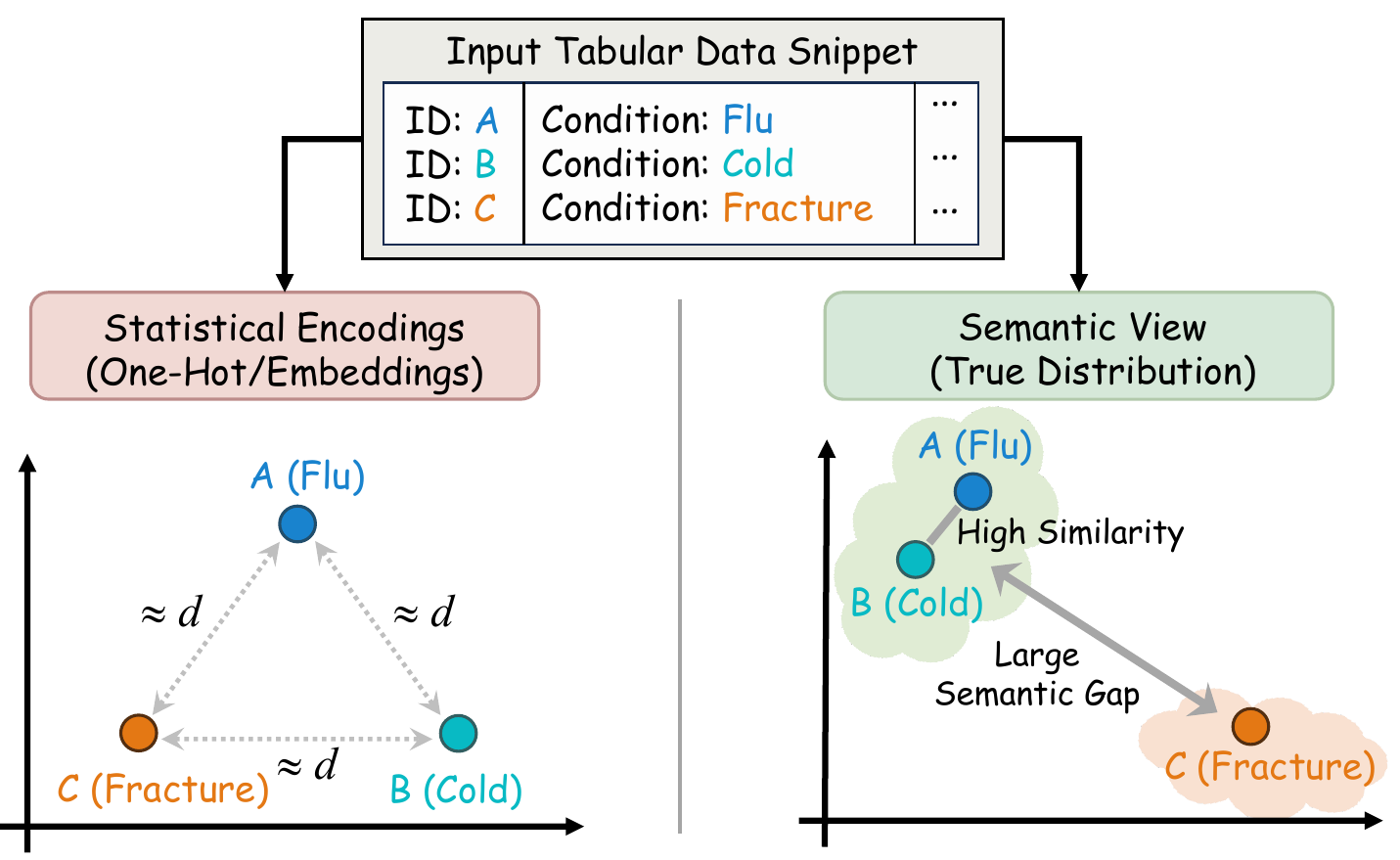}}
\caption{Comparison between statistical encoding (Left) and semantic relationship (Right). Statistical encoding strategies treat related concepts like `Flu' and `Cold' as orthogonal to `Fracture', i.e., equidistant. In contrast, ideally, the latent space should be aligned with open-world  semantics to ensure that conceptually similar samples are partitioned into coherent clusters.}
\label{fig:pre_ex}	
\end{figure}

Deep Clustering (DC) has emerged as a promising technique for knowledge discovery, particularly for tabular datasets widespread in real-world domains like finance and healthcare~\cite{survey6, NatureTableModel}. Unlike traditional shallow models~\cite{ADC,HARR}, DC leverages the powerful non-linear fitting capabilities of neural networks to capture intricate feature interactions within raw data, mapping them into a latent representation where clustering objectives can be optimized~\cite{survey4,survey5,RDLTD}. To effectively learn such representations, existing frameworks commonly rely on two paradigms: ReConstruction-based (RC) methods~\cite{IDC,NeuDCTD,IDEC}, which learn representations by minimizing data reconstruction error to preserve local structure, and Contrastive Learning-based (CL) methods~\cite{dmcc, wang2020understanding}, which maximize the agreement between augmented views of the same sample while pushing apart distinct samples to learn robust representations.

These approaches have demonstrated significant success
in capturing the statistical correlations and data manifolds
within tabular data. Nevertheless, they are limited to inferring latent dependencies implicitly through data-level co-occurrences, such as statistical associations between features, treating feature values merely as symbolic tokens or discrete scalars while disregarding the semantic priors encapsulated in tabular schemas, e.g., feature names and context of feature values. In the real world, where samples are often limited and sparse, such reliance is insufficient to support valid statistical inference. As illustrated in Figure~\ref{fig:pre_ex}, solely relying on such sparse statistical signals may fail to capture the proximity between conceptually related terms like `Flu' and `Cold', leaving them as orthogonal as unrelated concepts like `Fracture', thereby limiting the model's capacity to cluster similar samples.

Recently, some studies have attempted to incorporate explicit semantic priors into tabular learning by leveraging Pre-trained Language Models (PLM)~\cite{tabllm}. They typically employ serialization strategies~\cite{hegselmann2023tabllm}, e.g., concatenation of feature names and values, to convert table rows into text sequences for representation learning~\cite{survey2,survey1}. However, such concatenation-based serialization often overlook the distance structure that is essential for clustering, resulting in verbose textual descriptions that lack discriminative focus for partitioning tabular samples~\cite{transtab,iida2021tabbie}. To mitigate this deficiency, methods like~\cite{tabledc} derive initial embeddings from serialized text and subsequently constrain them using statistical priors, such as Mahalanobis distance. Despite these efforts to introduce statistical guidance, the model inevitably inherits the vulnerability to data sparsity. Even in large datasets, tabular data manifest as sparse, discrete points rather than continuous manifolds~\cite{survey2,survey1}. Consequently, heavy reliance on statistical priors drives the model to overfit isolated statistical correlations inferred from these discrete points, impeding the learning of generalized semantic representations. As a result, the intrinsic tabular semantics, weakened by concatenation-based serialization, are further overshadowed during training. This imposes an inevitable performance ceiling, as the clustering objective cannot compensate for the under-utilized semantic priors and unreliable statistical priors.

In this paper, we propose Tabular-Augmented Contrastive Clustering (TagCC), a novel framework that anchors statistical tabular representations to open-world textual concepts derived from tabular schemas. Specifically, distinct from relying on concatenation-based serialization, we leverage the extensive open-world knowledge of Large Language Models (LLMs) to interpret tabular semantics. To this end, we devise a semantic-aware transformation strategy. This mechanism explicitly guides LLM to distill intrinsic data semantics into structured, context-enriched textual anchors, ensuring the generated descriptions remain discriminative for clustering. By aligning tabular representations with these anchors via CL, TagCC enriches the statistical latent space with open-world semantics while preventing the representations from overfitting to isolated statistical correlations. Simultaneously, we jointly optimized the CL objective with the clustering objective, ensuring that the resulting partitions are both semantically coherent and structurally compact. Our main contributions are summarized as follows:

\begin{itemize}
    \item We propose the TagCC framework that introduces a semantic anchoring paradigm to tabular learning. Unlike existing methods that rely on statistical co-occurrence to infer latent metric space, TagCC anchors the tabular representations to stable open-world textual concepts, incorporating semantic priors to compensate for data sparsity and effectively preventing the model from overfitting to isolated statistical correlations.
    
    \item A semantic-aware transformation strategy coupled with a CL mechanism is proposed. These components encode intrinsic data semantics into context-enriched textual anchors and explicitly align tabular representations with these anchors, ensuring that the learned tabular representations capture open-world semantics and are discriminative for clustering.

    \item Extensive experiments on multiple benchmark datasets demonstrate that TagCC achieves superior clustering performance against state-of-the-art counterparts. This highlights that integrating open-world semantic priors yields semantically coherent partitions, outperforming methods that are constrained to inferring data distributions from closed statistical co-occurrences.
\end{itemize}

\begin{figure*}[!t]	
\centering
\centerline{\includegraphics[width=6.3in]{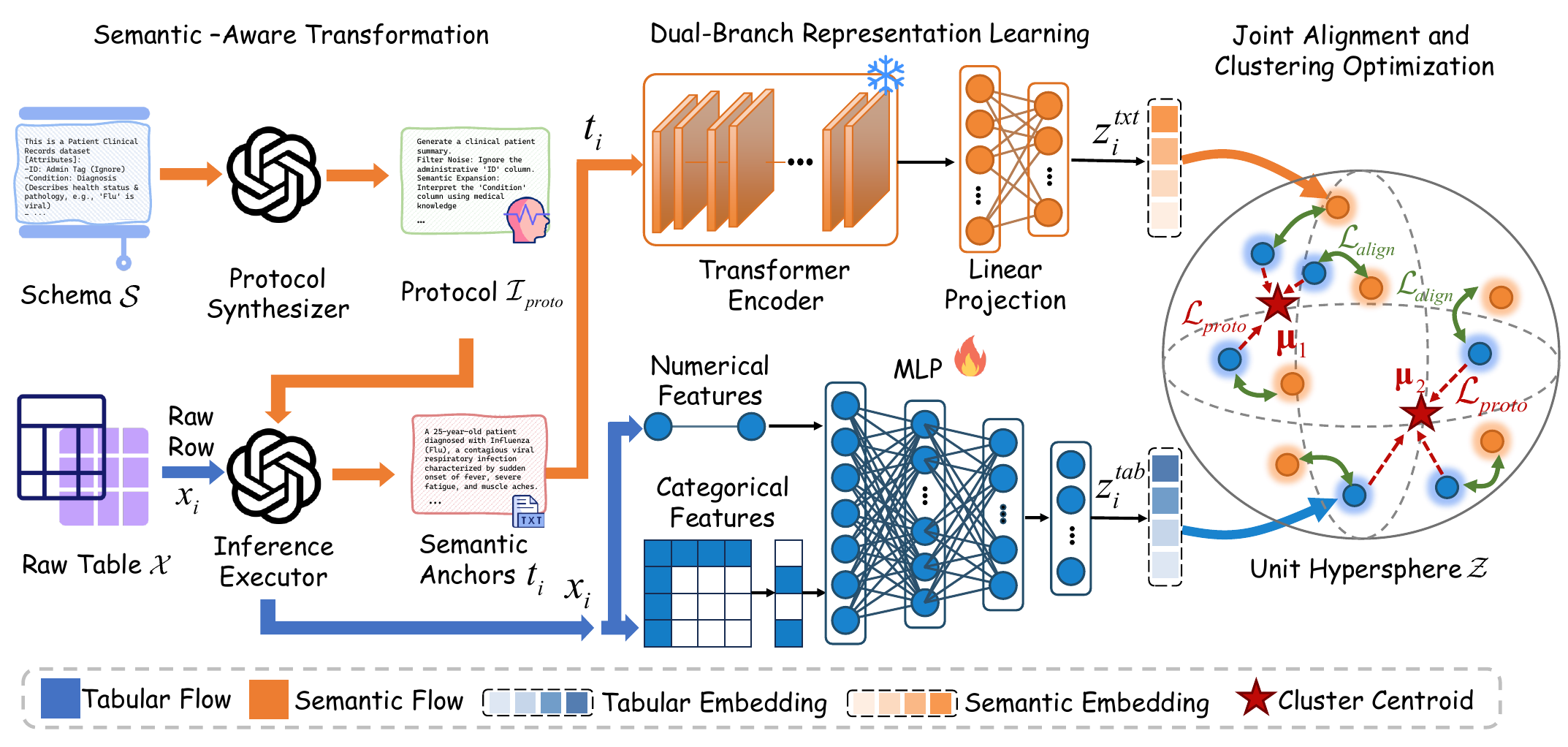}}
\caption{Overview of the TagCC framework. To bridge the semantic gap in tabular clustering, the framework augments raw table rows with LLM-synthesized semantic anchors $t_i$. These anchors and raw features $\mathbf{x}_i$ are then processed by a dual-branch encoder, where the trainable tabular network is optimized to align with the frozen semantic backbone. The optimization is conducted on a unit hypersphere $\mathcal{Z}$, where the model jointly minimizes CL loss $\mathcal{L}_{align}$ for semantic consistency and DC loss $\mathcal{L}_{proto}$ for structural compactness.
}
\label{fig:framework}
\end{figure*}

\section{Related Work}
\label{sct:rw}

In this section, we provide an overview of representation learning for clustering, categorizing methodologies into \textbf{RC-based} and \textbf{CL-based} paradigms, and recent methods for learning with \textbf{LLMs for tabular data}.

\textbf{RC-based Methods.} Early methods like~\cite{DEC} and~\cite{IDEC} optimize clustering by preserving local structure via autoencoder reconstruction constraints, while variational approaches~\cite{vade,dgg} extend this to probabilistic frameworks. Recent works such as~\cite{QGRL} and~\cite{IDC} have refined these for tabular data by modeling heterogeneous feature interactions. However, they fundamentally rely on geometric proximity, ignoring intrinsic semantics. Although~\cite{tabledc} attempts to incorporate semantics via serialized text initialization, it suffers from semantic forgetting. As its optimization is governed by statistical constraints (e.g., Mahalanobis distance), the initial open-world knowledge is diluted during training, causing the model to revert to local statistical dependencies.

\textbf{CL-based Methods.} This paradigm extracts discriminative features by maximizing agreement between augmented views. Methods like~\cite{scarf} and~\cite{subtab} enforce instance-level consistency, while graph-based approaches~\cite{sdcn,GCN} leverage GCNs to capture topological structures. Despite these designs, positive pairs are defined solely by intra-dataset statistical co-occurrence. Treating values as symbolic tokens, these methods overlook the semantic proximity between concepts, resulting in ambiguous cluster boundaries when statistical features are sparse.

\textbf{LLMs for Tabular Data}. Recent adaptations like~\cite{transtab} and~\cite{NatureTableModel} utilize transformer architectures to model field interactions from serialized text. However, these frameworks are predominantly designed for supervised tasks, relying on labels to resolve ambiguity. Directly applying them to clustering yields suboptimal results, as generic textual representations lack the specific cluster-structure constraints required for partitioning data.

\section{Preliminaries}
\label{sct:prelim}
In this section, we introduce the notations and formally define the problem of deep tabular clustering.

Consider a tabular dataset $X=\{\mathbf{x}_1,\mathbf{x}_2,\ldots,\mathbf{x}_n\}$ with $n$ data samples. Each data sample can be denoted as an $m$-dimensional row vector $\mathbf{x}_i=[x_{i,1},x_{i,2},\ldots,x_{i,m}]^\top$ represented by $m$ features $A=\{\mathbf{a}_1,\mathbf{a}_2,...,\mathbf{a}_m\}$. The dataset is structured by a schema $\mathcal{S}$, which comprises comprehensive metadata, e.g., background context of the dataset, feature definitions, and specific interpretations of feature values. The goal of deep tabular clustering is to jointly learn a non-linear mapping $f_\theta: \mathcal{X} \rightarrow \mathcal{Z}$ and a set of trainable centroids $\mathcal{M} = \{\boldsymbol{\mu}_j\}_{j=1}^k$, effectively projecting the raw data into a lower-dimensional latent space $\mathcal{Z} \in \mathbb{R}^d$ organized into $k$ disjoint clusters $\mathcal{C} = \{C_1, \dots, C_k\}$. 

This paper focuses on aligning the representation learning with semantic structures to facilitate semantically coherent clustering. Therefore, the key lies in how to effectively leverage the static metadata in $\mathcal{S}$ to generate discriminative semantic anchors for the tabular data, explicitly guiding the optimization of $f_\theta$ to yield representations that are both statistically compact and semantically distinctive.

\section{Proposed Method}
\label{sct:pm}

We first detail the semantic-aware transformation in Section~\ref{sct:transformation}, where LLMs are leveraged to distill intrinsic semantics into textual anchors. Then, in Section~\ref{sct:dual_view}, we introduce the dual-branch representation learning to explicitly align statistical representations with these semantic priors. Finally, the joint optimization objective is presented in Section~\ref{sct:optimization}. The pipeline of TagCC is shown in Figure~\ref{fig:framework}.

\subsection{Semantic-Aware Transformation}
\label{sct:transformation}

To synthesize semantic anchors that are both semantically enriched and discriminative for clustering, avoiding the information degradation inherent in serialization, a guided semantic inference process is essential. However, directly prompting LLMs to interpret tabular rows often yields overly generic outputs due to the lack of task-specific constraints. To mitigate this, we employ a self-instruction strategy that empowers the LLM to assume dual roles, acting as a protocol synthesizer to derive reasoning standards and an inference executor to perform guided transformation. Consequently, the implementation hinges on addressing two fundamental tasks: \textit{T1:} automatically distilling a transformation protocol from the tabular schema; and \textit{T2:} leveraging this protocol to synthesize discriminative, semantic-aware textual anchors.

For \textit{T1}, the LLM acting as a protocol synthesizer to analyzes the tabular schema $\mathcal{S}$ to synthesize a dataset-specific transformation protocol. We systematically provide the generation process with justifications as follows:

\begin{definition}[\textbf{semantic-aware Meta-Principles} $\mathcal{P}_{meta}$]
   Let $\mathcal{P}_{meta} = \{ \psi_{\text{hor}}, \psi_{\text{ver}} \}$ be a tuple of constraints that conditions the instruction generation on a schema-derived analytic perspective, e.g., an economist classifying income:
    \textbf{1) Horizontal Salience ($\psi_{\text{hor}}$):} The instruction should maximize information density by filtering out administrative noise irrelevant to the analytic goal; \textbf{2) Vertical Distinctiveness ($\psi_{\text{ver}}$):} The instruction must maximize semantic separability by interpreting symbolic terms into open-world concepts from the domain expert's perspective.
\end{definition}

Guided by these principles, the LLM formulates the schema $\mathcal{S}$ into a dataset-specific transformation protocol $\mathcal{I}_{proto}$
\begin{equation}
    \label{eq:gen_task}
    \mathcal{I}_{proto} = \text{LLM}(\mathcal{S}, \mathcal{P}_{meta}).
\end{equation}
This automated instantiation distills the tabular schema into executable inference rules, establishing a unified reasoning logic that governs the subsequent interpretation.

For \textit{T2}, the transformation is formulated as a protocol-guided inference task~\cite{TLhf}. Specifically, the LLM interprets the raw data $\mathbf{x}_i$ by strictly adhering to the reasoning logic defined in the derived protocol $\mathcal{I}_{proto}$:
\begin{equation}
    \label{eq:gen_tex}
    t_i \sim P_{\text{LLM}}(t \mid \mathbf{x}_i, \mathcal{I}_{proto}).
\end{equation}
By adhering to $\mathcal{I}_{proto}$, this controlled reasoning process generates a semantic-aware representation $t_i$ that encodes intrinsic data semantics while preserving structural distinctiveness for each sample. These representations $\{t_i\}_{i=1}^n$ serve as stable semantic anchors and are subsequently leveraged to guide the tabular representation learning.

\subsection{Dual-Branch Representation Learning}
\label{sct:dual_view}

After obtaining the textual anchors $\{t_i\}_{i=1}^n$, our goal is to utilize them to explicitly guide the tabular encoder $f_\theta$, to compensate for the sparsity of statistical information. To facilitate this, it is essential to align the feature spaces of the tabular data and the textual anchors to enable effective semantic injection. To this end, we propose a dual-branch representation learning framework. Specifically, this architecture deploys a tabular encoder branch and a semantic adapter branch to project these disparate modalities into a unified metric space, enabling explicit semantic alignment.

\textbf{Tabular encoder branch.} To distill intrinsic statistical patterns from the raw sample $\mathbf{x}_i$, the $f_\theta$ is designed to handle the inherent heterogeneity of tabular data. Unlike homogeneous pixel inputs, tabular data typically comprises mixed continuous features $\mathbf{a}_{num}$ and discrete features $\mathbf{a}_{cat}$. To map these disparate features into a compatible latent space, a type-specific embedding strategy is employed. Specifically, guided by schema $\mathcal{S}$, the $l$-th feature of $x_{i}$ is projected into a unified dense vector $e_{i,l}$:
\begin{equation}
    \label{eq:compute_emb}
    e_{i,l} = 
    \begin{cases} 
        \mathbf{W}_l x_{i,l} + \mathbf{b}_l, & \text{if } \mathbf{a}_{l} \in \mathbf{a}_{num} \\
        \text{Emb}_l(x_{i,l}), & \text{if } \mathbf{a}_{l} \in \mathbf{a}_{cat}
    \end{cases},
\end{equation}
where $l=\{1, \dots, m\}$. Here, $\mathbf{W}_l, \mathbf{b}_l$ and $\text{Emb}_l(\cdot)$ denote the learnable projection parameters for numerical and categorical features, respectively. Subsequently, to model complex high-order feature interactions, these local embeddings are concatenated and fused via a Multi-Layer Perceptron (MLP) $\Phi_{tab}$, formally defining the encoder mapping $f_\theta$:
\begin{equation}
    \label{eq:gen_rep_ta}
    f_\theta(\mathbf{x}_i) = \Phi_{tab}\left( \left[ e^{(1)} \oplus e^{(2)} \oplus \dots \oplus e^{(m)} \right] \right),
\end{equation}
The resulting $f_\theta(\mathbf{x}_i)$ serves as a statistically rich representation encoding both feature-specific properties and their non-linear correlations.

\textbf{Semantic adapter branch.} In parallel, to incorporate open-world knowledge into the clustering framework, the auxiliary semantic adapter $g_{\phi}$ processes the textual anchor $t_i$ generated in the previous stage. To align general semantic knowledge with the task-specific subspace, the text is first encoded by a frozen PLM backbone $E_{plm}$ (e.g., Sentence-BERT) and then transformed by a learnable linear projector:
\begin{equation}
    \label{eq:gen_rep_tx}
    g_{\phi}(t_i) = \mathbf{W}_{sem} E_{plm}(t_i) + \mathbf{b}_{sem},
\end{equation}
where $\mathbf{W}_{sem}$ and $\mathbf{b}_{sem}$ are the projection parameters.


\begin{remark}[Semantic Anchoring Strategy]
  
The frozen semantic backbone $E_{plm}$ acts as a semantic manifold regularizer to prevent representational co-adaptation and mode collapse. By providing a stationary target distribution, this strategy forces the tabular encoder to converge towards the pre-defined semantic structure, avoiding the trivial solutions common in unconstrained joint optimization.

\end{remark}

\textbf{Unified project head.} To bridge the distribution gap between the tabular and texture modalities, the raw representation $f_\theta(\mathbf{x}_i)$ and the semantic representation $g_\phi(t_i)$ are mapped onto a shared unit hypersphere $\mathcal{Z}$. Distinct non-linear projection heads ($\text{Proj}_{\theta}$ and $\text{Proj}_{\phi}$) are applied, followed by $\ell_2$ normalization:
\begin{equation}
    \label{eq:comput_z}
    z_i^{tab} = \frac{\text{Proj}_{\theta}(f_\theta(\mathbf{x}_i))}{\|\text{Proj}_{\theta}(f_\theta(\mathbf{x}_i))\|_2}, \quad z_i^{txt} = \frac{\text{Proj}_{\phi}(g_\phi(t_i))}{\|\text{Proj}_{\phi}(g_\phi(t_i))\|_2}.
\end{equation}
Here, $z_i^{tab}, z_i^{txt} \in \mathbb{R}^d$ denote the final normalized embeddings. This geometric alignment ensures that the dot product directly reflects semantic similarity, thereby facilitating the subsequent semantic-aware clustering objective.

\subsection{Joint Alignment and Clustering Optimization}
\label{sct:optimization}
After obtaining the unified representations for tabular data and semantic anchors, to achieve semantically coherent clustering and avoid overfitting to isolated statistical correlations, semantic consistency and structural compactness are simultaneously enforced within the shared latent space. Accordingly, the optimization is governed by a tripartite objective function, integrating cross-view CL, prototype-based DC, and entropy regularization:
\begin{equation}
    \label{eq:all_lost}
    \mathcal{L}_{total} = \mathcal{L}_{align} + \alpha\mathcal{L}_{proto} + \beta\mathcal{L}_{ent},
\end{equation}
where $\alpha$ and $\beta$ are balancing coefficients.

To align the tabular representation with open-world semantics, the generated anchor $z_i^{txt}$ serves as the positive reference for the tabular feature $z_i^{tab}$. A CL loss is employed to maximize the agreement between corresponding representations within the batch of size $B$:
\begin{equation}
    \label{eq:loss_align}
    \mathcal{L}_{align} = - \frac{1}{B} \sum_{i=1}^B \log \frac{\exp(\langle z_i^{tab}, z_i^{txt} \rangle / \tau)}{\sum_{j=1}^B \exp(\langle z_i^{tab}, z_j^{txt} \rangle / \tau)},
\end{equation}
where $\tau$ is the temperature parameter and $\langle \cdot, \cdot \rangle$ denotes the dot product. This alignment establishes a semantically meaningful layout in $\mathcal{Z}$ before cluster structures are explicitly formed.

\begin{theorem}[Robustness against hallucination]
\label{thm:robustness_main}

The contrastive alignment objective $\mathcal{L}_{align}$ acts as a robust regularizer against semantic hallucination. It effectively limits the impact of suboptimal anchors on the learning process, ensuring that the model's error grows much slower than the hallucination magnitude $\epsilon$ at a sub-linear rate of $\mathcal{O}(\sqrt{\epsilon})$.

\end{theorem}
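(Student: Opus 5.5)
To make Theorem~\ref{thm:robustness_main} precise, we model hallucination as a perturbation of the semantic anchors on the hypersphere. Let $z_i^{txt,\star}$ denote the ideal (hallucination-free) anchor embedding and let the observed anchor be $\tilde z_i^{txt}$ with $\|\tilde z_i^{txt} - z_i^{txt,\star}\|_2 \le \epsilon$ for every $i$. Let $\tilde z_i^{tab}$ be the tabular embedding that minimizes $\mathcal{L}_{align}$ against the perturbed anchors, and $z_i^{tab,\star}$ the minimizer against the clean ones. The quantity to bound is the model error $\|\tilde z_i^{tab} - z_i^{tab,\star}\|_2$, and the goal is to show it is $\mathcal{O}(\sqrt{\epsilon})$.

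\textbf{Step 1: Lipschitz stability of the loss in the anchors.} Write the per-sample loss as $\ell_i(z; \{z_j^{txt}\}) = -\langle z, z_i^{txt}\rangle/\tau + \log\sum_j \exp(\langle z, z_j^{txt}\rangle/\tau)$. Since all vectors lie on the unit sphere, the gradient of $\ell_i$ with respect to any anchor has norm at most $2/\tau$. Hence, uniformly in $z$,
\[
|\mathcal{L}_{align}(z;\tilde z^{txt}) - \mathcal{L}_{align}(z; z^{txt,\star})| \le 2\epsilon/\tau .
\]
Evaluating at both minimizers then gives a suboptimality gap
\[
\mathcal{L}_{align}(\tilde z^{tab}; z^{txt,\star}) - \mathcal{L}_{align}(z^{tab,\star}; z^{txt,\star}) \le 4\epsilon/\tau .
\]
This is a linear-in-$\epsilon$ bound on the \emph{loss}.

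\textbf{Step 2: convert the loss gap into a distance gap through quadratic growth.} Near its minimizer, the clean loss should satisfy a quadratic growth condition
\[
\mathcal{L}(z) - \mathcal{L}(z^\star) \ge \frac{\kappa}{2}\|z - z^\star\|^2 ,
\]
measured along the sphere. Combining this with Step 1 yields $\|\tilde z^{tab} - z^{tab,\star}\| \le \sqrt{8\epsilon/(\kappa\tau)} = \mathcal{O}(\sqrt{\epsilon})$, which is the claimed sub-linear rate. The square root arises precisely from translating a linear loss perturbation through a quadratic basin.

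\textbf{Main obstacle: justifying the curvature $\kappa>0$.} The InfoNCE objective is not globally strongly convex in the network parameters, so I would argue directly at the level of embeddings. The Hessian of the log-sum-exp term with respect to $z$ is $\frac{1}{\tau^2}\mathrm{Cov}_{p}(z_j^{txt})$, where $p$ is the softmax distribution over anchors. Projected onto the tangent space of the sphere at $z^\star$, this covariance is positive definite provided the clean anchors are not collinear, i.e.\ they are semantically diverse. The frozen $E_{plm}$ together with protocol-guided anchor generation is what motivates this assumption. I would therefore state $\kappa$ as $\tau^{-2}$ times the smallest tangent eigenvalue of this covariance, plus the Riemannian curvature term $\langle z^\star, \nabla\rangle$ contributed by the sphere constraint.

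Finally, I would remark that the guarantee holds for the embedding-level (or sufficiently expressive encoder) minimizer. It also holds locally, within the basin where quadratic growth is valid, which is consistent with the $\epsilon$-small regime in which hallucinations are treated as perturbations.
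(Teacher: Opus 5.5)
Your skeleton is sound: an $O(\epsilon)$ uniform perturbation of the loss in the anchors, converted into an $O(\sqrt{\epsilon})$ deviation of the minimizer through quadratic growth. Step 1 is also correct, although the constant $2/\tau$ really comes from each logit $\langle z, z_j^{txt}\rangle/\tau$ moving by at most $\epsilon/\tau$, together with $s\mapsto -s_i+\log\sum_j e^{s_j}$ being $2$-Lipschitz in $\|\cdot\|_\infty$. A per-anchor gradient bound of $2/\tau$ summed over $B$ perturbed anchors would only give $2B\epsilon/\tau$.

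The genuine gap is the step you flagged yourself: your argument for $\kappa>0$ does not work as stated.
\begin{itemize}
\item The softmax-weighted covariance of the anchors has rank equal to the dimension of their affine hull, which is at most $\min(B-1,d)$. Non-collinearity only makes this at least $2$, so for $d\ge 4$ the covariance need not be positive definite on the $(d-1)$-dimensional tangent space.
\item When the anchors do not span $\mathbb{R}^d$ (e.g.\ $B<d$), there are tangent directions at $z^\star$ orthogonal to every anchor (the minimizer lies in their span). Along these, the Euclidean Hessian $\tau^{-2}\mathrm{Cov}_p$ vanishes identically. The only curvature there is the sphere term, which enters the Riemannian Hessian as $-\langle z^\star,\nabla\ell(z^\star)\rangle$ (not $+$), and you never establish its sign.
\item A positive-definite Hessian at $z^\star$ gives only local growth. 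Placing $\tilde z^{tab}$ in that basin would additionally require uniqueness of the clean minimizer plus a compactness argument.
\end{itemize}

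The repair is simpler than the route you propose. With the anchors fixed, $\mathcal{L}_{align}=\frac1B\sum_i\ell_i(z_i^{tab})$ decouples, so work per sample; this also avoids a stray factor $B$ when reading off a single error from the averaged loss. Write $w_j:=z_j^{txt,\star}$ and $z_i^\star:=z_i^{tab,\star}$. The function $\ell_i(z)=\log\sum_j\exp(\langle z,w_j-w_i\rangle/\tau)$ is convex on all of $\mathbb{R}^d$. Unless every anchor equals $w_i$, its infimum is approached along $z=s\,w_i$, $s\to\infty$, but never attained, so $\ell_i$ has no stationary point. Hence every minimizer over the closed unit ball lies on the sphere, so minimizing over the ball and over the sphere are the same problem. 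The KKT conditions then give $\nabla\ell_i(z_i^\star)=-\kappa_i z_i^\star$ with $\kappa_i=\tau^{-1}\langle z_i^\star,\,w_i-\sum_j p_j^\star w_j\rangle>0$. Convexity yields, for every unit $z$,
\[
\ell_i(z)-\ell_i(z_i^\star)\;\ge\;\langle\nabla\ell_i(z_i^\star),\,z-z_i^\star\rangle\;=\;\kappa_i\bigl(1-\langle z_i^\star,z\rangle\bigr)\;=\;\tfrac{\kappa_i}{2}\,\|z-z_i^\star\|^2 .
\]
This is global quadratic growth on the sphere. It needs no diversity assumption and no basin argument, and it gives uniqueness of $z_i^\star$ for free. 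Your Step 2 then yields $\sqrt{8\epsilon/(\kappa_i\tau)}$ verbatim.

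Note also that the square root is an artifact of comparing the losses in sup norm. On the unit ball, the $z$-gradient of $\ell_i(\cdot;\tilde z^{txt})-\ell_i(\cdot;z^{txt,\star})$ is at most $2\epsilon(\tau^{-1}+\tau^{-2})$. Combining this with the same growth inequality gives the linear bound $\|\tilde z_i^{tab}-z_i^{\star}\|\le 4\epsilon(\tau^{-1}+\tau^{-2})/\kappa_i$, which is stronger for small $\epsilon$. So your $O(\sqrt{\epsilon})$ bound is valid but loose, and it should not be read as the error growing more slowly than $\epsilon$.
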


\begin{proof}
    See Theorem~\ref{thm:robustness} in Appendix \ref{sct:robustness} and Appendix~\ref{sct:exp_huRobust} for the proofs and empirical evidence.
\end{proof}
 
To uncover the latent cluster structure within the aligned space, the framework employs a set of learnable centroids $\mathcal{M} = \{\boldsymbol{\mu}_j\}_{j=1}^k$. Following the~\cite{li2021prototypical}, the probability $p_{ij}$ of assigning sample $z_i^{tab}$ to cluster $C_j$ is computed via a softmax over the similarity scores:
\begin{equation}
    \label{eq:compute_assign}
    p_{ij} = \frac{\exp(\langle z_i^{tab}, \boldsymbol{\mu}_j \rangle / \tau)}{\sum_{j'=1}^k \exp(\langle z_i^{tab}, \boldsymbol{\mu}_{j'} \rangle / \tau)}.
\end{equation}
Let $P \in \mathbb{R}^{B \times k}$ denote the soft assignment matrix with entries $p_{ij}$. To enforce high-confidence assignments, a target distribution matrix $Q \in \mathbb{R}^{B \times k}$ is generated, where each entry $q_{ij}$ is obtained by sharpening $p_{ij}$:
    \begin{equation}
        \label{eq:compu_q}
        q_{ij} = \frac{p_{ij}^2 / \sum_i p_{ij}}{\sum_{j'=1}^k (p_{ij'}^2 / \sum_i p_{ij'})}.
    \end{equation}
The clustering loss then minimizes the Kullback-Leibler divergence between the target $Q$ and the prediction $P$:
\begin{equation}
    \label{eq:comp_proto_loss}
    \mathcal{L}_{proto} = \text{KL}(Q \| P) = \sum_{i=1}^B \sum_{j=1}^k q_{ij} \log \frac{q_{ij}}{p_{ij}}.
\end{equation}
Additionally, to prevent the trivial solution where all samples collapse into a single cluster, an entropy regularization term $\mathcal{L}_{ent} = -H(\bar{p})$ is introduced, maximizing the entropy of the mean cluster assignment $\bar{p}$ across the batch.

Directly optimizing all terms may lead to suboptimal convergence due to the misalignment of feature spaces. To mitigate this cold-start problem, we implement a progressive curriculum. Specifically, for the initial $T_{warm}$ epochs, the model is optimized solely via $\mathcal{L}_{align}$ to establish a preliminary semantic topology. In the subsequent structure refinement phase ($epoch > T_{warm}$), the full objective $\mathcal{L}_{total}$ is activated. This two-stage mechanism ensures that the cluster centroids $\mathcal{M}$ are initialized within a semantically meaningful manifold, thereby stabilizing the optimization of the decision boundaries. The algorithm is summarized as Algorithm~\ref{alg:tacc_training} in Appendix~\ref{sct:psd}.



\begin{table}[!t]
\caption{Dataset statistics. $m$, $n$, and $k^*$ are the numbers of features, samples, and true number of clusters, respectively.}
\label{tb:statistics}
\centering
\resizebox{0.95\columnwidth}{!}{ 
\begin{tabular}{r|cc|c|ccc} 
\toprule
No. & Dataset & Abbrev. & Domain & $m$ & $n$ & $k^*$ \\
\midrule
1 & Soybean & SO & Agriculture & 35 & 307 & 15 \\	
2 & Breast cancer & BR & Healthcare & 9 & 286 & 2 \\
3 & Car evaluation & CA & Transportation & 6 & 1728 & 4 \\
4 & Lenses & LE & Healthcare & 4 & 24 & 3 \\
5 & Mammographic & MA & Healthcare & 5 & 961 & 2 \\
6 & Zoo & ZO & Biology & 15 & 101 & 7 \\
7 & Hayes-roth & HA & Social Science & 4 & 132 & 3 \\
8 & Fertility & FE & Healthcare & 9 & 100 & 2 \\	
9 & Heart failure & HE & Healthcare & 12 & 299 & 2 \\
10 & Nursery & NU & Social Science & 8 & 12960 & 4 \\
11 & Mushroom & MU & Biology & 21 & 8124 & 2 \\
12 & Adult & AD & Finance & 14 & 48842 & 2 \\	
\bottomrule
\end{tabular}}
\end{table}

\begin{table*}[!t]
\centering
\caption{Clustering performance evaluated by the ACC, NMI, and ARI on various datasets (mean $\pm$ std). The best and second-best results on each dataset are highlighted in \textbf{bold} and \underline{underlined}, respectively. $\Delta$ denotes the relative performance gain of the best method over the second-best, calculated as $(1st - 2nd) / 2nd \times 100\%$. }
\resizebox{2.01\columnwidth}{!}{
\begin{tabular}{l|c|ccccccccc}
\toprule
 \multirow{2}{*}{Data} & \multirow{2}{*}{Index} & \multirow{2}{*}{Kmeans} & {DEC} & {CC} & {SCARF} & {QGRL} & {IDC} & {TabDC} & {TagCC} & \multirow{2}{*}{$\Delta$}  \\      
 & & & [ICLR'16] & [AAAI'21] & [ICLR'22] & [SIGKDD'24] & [ICML'24] & [SIGMOD'25] & [Ours] & \\
\midrule
\multirow{3}{*}{SO} 
& ARI & 0.3918 $\pm$ 0.001 & 0.3808 $\pm$ 0.032 & 0.3050 $\pm$ 0.012 & \cellcolor{blue!10}\underline{0.4015 $\pm$ 0.013} & 0.2620 $\pm$ 0.042 & 0.1489 $\pm$ 0.031 & 0.3340 $\pm$ 0.019 & \cellcolor{red!10}\textbf{0.4773 $\pm$ 0.029} & \textcolor{red}{ $\uparrow$ 18.88\%} \\       
& ACC & \cellcolor{blue!10}\underline{0.5694 $\pm$ 0.001} & 0.5621 $\pm$ 0.039 & 0.4637 $\pm$ 0.020 & 0.5324 $\pm$ 0.019 & 0.3906 $\pm$ 0.031 & 0.2907 $\pm$ 0.045 & 0.4990 $\pm$ 0.018 & \cellcolor{red!10}\textbf{0.6101 $\pm$ 0.034} & \textcolor{red}{ $\uparrow$ 7.15\%} \\
& NMI & 0.6689 $\pm$ 0.001 & 0.6679 $\pm$ 0.024 & 0.5842 $\pm$ 0.013 & \cellcolor{blue!10}\underline{0.6759 $\pm$ 0.009} & 0.4265 $\pm$ 0.049 & 0.3533 $\pm$ 0.047 & 0.6440 $\pm$ 0.012 & \cellcolor{red!10}\textbf{0.7122 $\pm$ 0.015} & \textcolor{red}{ $\uparrow$ 5.37\%} \\        
\midrule
\multirow{3}{*}{BR} 
& ARI & -0.0035 $\pm$ 0.000 & 0.0438 $\pm$ 0.058 & 0.0364 $\pm$ 0.033 & -0.0033 $\pm$ 0.000 & \cellcolor{red!10}\textbf{0.1700 $\pm$ 0.000} & 0.0564 $\pm$ 0.051 & 0.0150 $\pm$ 0.037 & \cellcolor{blue!10}\underline{0.1489 $\pm$ 0.053} & \textcolor{blue}{ $\downarrow$ 12.41\%} \\  
& ACC & 0.5126 $\pm$ 0.000 & 0.5818 $\pm$ 0.083 & 0.5852 $\pm$ 0.051 & 0.5091 $\pm$ 0.003 & \cellcolor{blue!10}\underline{0.7203 $\pm$ 0.000} & 0.6077 $\pm$ 0.063 & 0.5460 $\pm$ 0.068 & \cellcolor{red!10}\textbf{0.7238 $\pm$ 0.015} & \textcolor{red}{ $\uparrow$ 0.49\%} \\
& NMI & 0.0034 $\pm$ 0.000 & 0.0242 $\pm$ 0.028 & 0.0357 $\pm$ 0.028 & 0.0019 $\pm$ 0.001 & \cellcolor{red!10}\textbf{0.0866 $\pm$ 0.010} & 0.0397 $\pm$ 0.027 & 0.0080 $\pm$ 0.011 & \cellcolor{blue!10}\underline{0.0746 $\pm$ 0.026} & \textcolor{blue}{ $\downarrow$ 13.86\%} \\    

\midrule
\multirow{3}{*}{CA} & ARI & \cellcolor{blue!10}\underline{0.0394 $\pm$ 0.000} & 0.0094 $\pm$ 0.013 & 0.0311 $\pm$ 0.012 & 0.0213 $\pm$ 0.015 & -0.0013 $\pm$ 0.005 & 0.0160 $\pm$ 0.018 & 0.0270 $\pm$ 0.029 & \cellcolor{red!10}\textbf{0.1862 $\pm$ 0.034} & \textcolor{red}{ $\uparrow$ 372.59\%} \\ 
& ACC & 0.3595 $\pm$ 0.001 & 0.3659 $\pm$ 0.043 & 0.3401 $\pm$ 0.018 & 0.3418 $\pm$ 0.033 & \cellcolor{red!10}\textbf{0.6241 $\pm$ 0.090} & 0.5036 $\pm$ 0.031 & 0.3390 $\pm$ 0.034 & \cellcolor{blue!10}\underline{0.5198 $\pm$ 0.072} & \textcolor{blue}{ $\downarrow$ 16.71\%} \\
& NMI & \cellcolor{blue!10}\underline{0.0859 $\pm$ 0.000} & 0.0240 $\pm$ 0.021 & 0.0683 $\pm$ 0.015 & 0.0524 $\pm$ 0.029 & 0.0029 $\pm$ 0.002 & 0.0201 $\pm$ 0.015 & 0.0560 $\pm$ 0.027 & \cellcolor{red!10}\textbf{0.2392 $\pm$ 0.074} & \textcolor{red}{ $\uparrow$ 178.46\%} \\      
\midrule
\multirow{3}{*}{LE} 
& ARI & 0.0814 $\pm$ 0.005 & 0.1537 $\pm$ 0.172 & 0.1569 $\pm$ 0.068 & 0.1019 $\pm$ 0.171 & \cellcolor{blue!10}\underline{0.1995 $\pm$ 0.127} & 0.0816 $\pm$ 0.169 & 0.1330 $\pm$ 0.085 & \cellcolor{red!10}\textbf{0.4109 $\pm$ 0.145} & \textcolor{red}{ $\uparrow$ 105.96\%} \\      
& ACC & 0.5333 $\pm$ 0.002 & 0.5667 $\pm$ 0.106 & 0.2875 $\pm$ 0.043 & 0.4958 $\pm$ 0.099 & \cellcolor{blue!10}\underline{0.6667 $\pm$ 0.037} & 0.5333 $\pm$ 0.107 & 0.5290 $\pm$ 0.070 & \cellcolor{red!10}\textbf{0.7042 $\pm$ 0.104} & \textcolor{red}{ $\uparrow$ 5.62\%} \\
& NMI & 0.2927 $\pm$ 0.010 & 0.3177 $\pm$ 0.095 & 0.3254 $\pm$ 0.083 & 0.2006 $\pm$ 0.139 & 0.2681 $\pm$ 0.081 & 0.1981 $\pm$ 0.136 & \cellcolor{blue!10}\underline{0.3550 $\pm$ 0.120} & \cellcolor{red!10}\textbf{0.4191 $\pm$ 0.017} & \textcolor{red}{ $\uparrow$ 18.06\%} \\       
\midrule
\multirow{3}{*}{MA} 
& ARI & 0.3421 $\pm$ 0.000 & 0.3232 $\pm$ 0.022 & 0.3429 $\pm$ 0.002 & 0.2895 $\pm$ 0.055 & \cellcolor{blue!10}\underline{0.3823 $\pm$ 0.013} & 0.3107 $\pm$ 0.044 & 0.3470 $\pm$ 0.007 & \cellcolor{red!10}\textbf{0.4134 $\pm$ 0.016} & \textcolor{red}{ $\uparrow$ 8.13\%} \\    
& ACC & 0.7928 $\pm$ 0.000 & 0.7845 $\pm$ 0.010 & 0.7931 $\pm$ 0.001 & 0.7681 $\pm$ 0.027 & 0.7944 $\pm$ 0.011 & 0.7783 $\pm$ 0.021 & \cellcolor{blue!10}\underline{0.7950 $\pm$ 0.003} & \cellcolor{red!10}\textbf{0.8217 $\pm$ 0.006} & \textcolor{red}{ $\uparrow$ 3.36\%} \\
& NMI & 0.2823 $\pm$ 0.000 & 0.2558 $\pm$ 0.012 & 0.2665 $\pm$ 0.002 & 0.2427 $\pm$ 0.037 & \cellcolor{blue!10}\underline{0.3108 $\pm$ 0.017} & 0.2528 $\pm$ 0.041 & 0.2790 $\pm$ 0.006 & \cellcolor{red!10}\textbf{0.3365 $\pm$ 0.014} & \textcolor{red}{ $\uparrow$ 8.27\%} \\        
\midrule
\multirow{3}{*}{ZO} 
& ARI & 0.6586 $\pm$ 0.013 & 0.6419 $\pm$ 0.078 & 0.3494 $\pm$ 0.056 & 0.6616 $\pm$ 0.088 & 0.7847 $\pm$ 0.039 & 0.4122 $\pm$ 0.155 & \cellcolor{blue!10}\underline{0.8050 $\pm$ 0.059} & \cellcolor{red!10}\textbf{0.8548 $\pm$ 0.089} & \textcolor{red}{ $\uparrow$ 6.19\%} \\   
& ACC & 0.7307 $\pm$ 0.007 & 0.7178 $\pm$ 0.066 & 0.4911 $\pm$ 0.053 & 0.7297 $\pm$ 0.070 & 0.8059 $\pm$ 0.017 & 0.6010 $\pm$ 0.085 & \cellcolor{blue!10}\underline{0.8410 $\pm$ 0.054} & \cellcolor{red!10}\textbf{0.8663 $\pm$ 0.069} & \textcolor{red}{ $\uparrow$ 3.01\%} \\
& NMI & 0.8044 $\pm$ 0.002 & 0.7934 $\pm$ 0.043 & 0.5716 $\pm$ 0.050 & 0.8020 $\pm$ 0.033 & 0.7754 $\pm$ 0.042 & 0.4959 $\pm$ 0.130 & \cellcolor{blue!10}\underline{0.8660 $\pm$ 0.018} & \cellcolor{red!10}\textbf{0.8839 $\pm$ 0.025} & \textcolor{red}{ $\uparrow$ 2.07\%} \\        
\midrule
\multirow{3}{*}{HA} & ARI & 0.0446 $\pm$ 0.001 & 0.0094 $\pm$ 0.013 & 0.0241 $\pm$ 0.022 & \cellcolor{blue!10}\underline{0.0480 $\pm$ 0.035} & -0.0024 $\pm$ 0.006 & 0.0463 $\pm$ 0.036 & 0.0390 $\pm$ 0.018 & \cellcolor{red!10}\textbf{0.0863 $\pm$ 0.066} & \textcolor{red}{ $\uparrow$ 79.79\%} \\  
& ACC & 0.4413 $\pm$ 0.001 & 0.4025 $\pm$ 0.032 & 0.3250 $\pm$ 0.025 & \cellcolor{blue!10}\underline{0.4431 $\pm$ 0.030} & 0.3956 $\pm$ 0.021 & \cellcolor{blue!10}\underline{0.4431 $\pm$ 0.038} & 0.4430 $\pm$ 0.026 & \cellcolor{red!10}\textbf{0.5162 $\pm$ 0.060} & \textcolor{red}{ $\uparrow$ 16.50\%} \\
& NMI & 0.0575 $\pm$ 0.001 & 0.0240 $\pm$ 0.021 & 0.0413 $\pm$ 0.028 & 0.0582 $\pm$ 0.035 & 0.0091 $\pm$ 0.008 & \cellcolor{blue!10}\underline{0.0746 $\pm$ 0.037} & 0.0600 $\pm$ 0.017 & \cellcolor{red!10}\textbf{0.1027 $\pm$ 0.047} & \textcolor{red}{ $\uparrow$ 37.67\%} \\       
\midrule
\multirow{3}{*}{FE} 
& ARI & -0.0041 $\pm$ 0.000 & -0.0098 $\pm$ 0.005 & -0.0038 $\pm$ 0.003 & \cellcolor{blue!10}\underline{0.0302 $\pm$ 0.035} & 0.0171 $\pm$ 0.082 & 0.0033 $\pm$ 0.029 & -0.0080 $\pm$ 0.004 & \cellcolor{red!10}\textbf{0.0637 $\pm$ 0.071} & \textcolor{red}{ $\uparrow$ 110.93\%} \\  
& ACC & 0.5100 $\pm$ 0.000 & 0.5070 $\pm$ 0.009 & 0.5180 $\pm$ 0.010 & 0.6030 $\pm$ 0.075 & \cellcolor{blue!10}\underline{0.7170 $\pm$ 0.108} & 0.5620 $\pm$ 0.073 & 0.5050 $\pm$ 0.005 & \cellcolor{red!10}\textbf{0.7740 $\pm$ 0.103} & \textcolor{red}{ $\uparrow$ 7.95\%} \\
& NMI & 0.0028 $\pm$ 0.000 & 0.0127 $\pm$ 0.008 & 0.0060 $\pm$ 0.005 & 0.0176 $\pm$ 0.014 & \cellcolor{red!10}\textbf{0.0474 $\pm$ 0.027} & 0.0128 $\pm$ 0.015 & 0.0110 $\pm$ 0.009 & \cellcolor{blue!10}\underline{0.0321 $\pm$ 0.034} & \textcolor{blue}{ $\downarrow$ 32.28\%} \\    
\midrule
\multirow{3}{*}{HE} 
& ARI & -0.0033 $\pm$ 0.000 & 0.0438 $\pm$ 0.058 & -0.0014 $\pm$ 0.002 & 0.0273 $\pm$ 0.002 & \cellcolor{blue!10}\underline{0.0829 $\pm$ 0.095} & 0.0036 $\pm$ 0.009 & 0.0090 $\pm$ 0.010 & \cellcolor{red!10}\textbf{0.1563 $\pm$ 0.092} & \textcolor{red}{ $\uparrow$ 88.54\%} \\   
& ACC & 0.5518 $\pm$ 0.000 & 0.5818 $\pm$ 0.083 & 0.5167 $\pm$ 0.012 & 0.6217 $\pm$ 0.003 & \cellcolor{blue!10}\underline{0.6341 $\pm$ 0.081} & 0.5512 $\pm$ 0.021 & 0.5630 $\pm$ 0.031 & \cellcolor{red!10}\textbf{0.7137 $\pm$ 0.033} & \textcolor{red}{ $\uparrow$ 12.55\%} \\
& NMI & 0.0003 $\pm$ 0.000 & 0.0242 $\pm$ 0.028 & 0.0014 $\pm$ 0.001 & 0.0058 $\pm$ 0.001 & \cellcolor{blue!10}\underline{0.0955 $\pm$ 0.048} & 0.0029 $\pm$ 0.003 & 0.0030 $\pm$ 0.002 & \cellcolor{red!10}\textbf{0.1109 $\pm$ 0.076} & \textcolor{red}{ $\uparrow$ 16.13\%} \\       
  
\midrule
\multirow{3}{*}{NU}
& ARI & \cellcolor{blue!10}\underline{0.0927 $\pm$ 0.005} & 0.0651 $\pm$ 0.071 & 0.0839 $\pm$ 0.086 & 0.0900 $\pm$ 0.101 & 0.0800 $\pm$ 0.048 & 0.0103 $\pm$ 0.008 & 0.0430 $\pm$ 0.025 & \cellcolor{red!10}\textbf{0.2822 $\pm$ 0.026} & \textcolor{red}{ $\uparrow$ 204.42\%} \\ 
& ACC & 0.3326 $\pm$ 0.002 & 0.3211 $\pm$ 0.057 & 0.3110 $\pm$ 0.068 & 0.3336 $\pm$ 0.076 & \cellcolor{blue!10}\underline{0.3383 $\pm$ 0.037} & 0.3014 $\pm$ 0.015 & 0.2920 $\pm$ 0.026 & \cellcolor{red!10}\textbf{0.4741 $\pm$ 0.044} & \textcolor{red}{ $\uparrow$ 40.14\%} \\
& NMI & 0.1319 $\pm$ 0.009 & 0.0907 $\pm$ 0.094 & 0.1116 $\pm$ 0.116 & \cellcolor{blue!10}\underline{0.1346 $\pm$ 0.131} & 0.1032 $\pm$ 0.058 & 0.0235 $\pm$ 0.016 & 0.0650 $\pm$ 0.035 & \cellcolor{red!10}\textbf{0.3232 $\pm$ 0.037} & \textcolor{red}{ $\uparrow$ 140.12\%} \\        
\midrule
\multirow{3}{*}{MU} 
& ARI & 0.2067 $\pm$ 0.000 & 0.2026 $\pm$ 0.072 & 0.2567 $\pm$ 0.132 & \cellcolor{blue!10}\underline{0.4464 $\pm$ 0.091} & 0.1974 $\pm$ 0.166 & 0.1932 $\pm$ 0.117 & 0.1260 $\pm$ 0.104 & \cellcolor{red!10}\textbf{0.5266 $\pm$ 0.102} & \textcolor{red}{ $\uparrow$ 17.97\%} \\   
& ACC & 0.7279 $\pm$ 0.000 & 0.7220 $\pm$ 0.041 & 0.7377 $\pm$ 0.088 & \cellcolor{blue!10}\underline{0.8345 $\pm$ 0.042} & 0.7035 $\pm$ 0.098 & 0.7035 $\pm$ 0.086 & 0.6570 $\pm$ 0.094 & \cellcolor{red!10}\textbf{0.8614 $\pm$ 0.035} & \textcolor{red}{ $\uparrow$ 3.22\%} \\
& NMI & 0.1471 $\pm$ 0.000 & 0.1634 $\pm$ 0.050 & 0.2246 $\pm$ 0.123 & \cellcolor{blue!10}\underline{0.4341 $\pm$ 0.108} & 0.1986 $\pm$ 0.147 & 0.1553 $\pm$ 0.084 & 0.1370 $\pm$ 0.038 & \cellcolor{red!10}\textbf{0.4667 $\pm$ 0.100} & \textcolor{red}{ $\uparrow$ 7.51\%} \\         
\midrule
\multirow{3}{*}{AD} 
& ARI & -0.0319 $\pm$ 0.000 & -0.0319 $\pm$ 0.000 & \cellcolor{blue!10}\underline{0.0727 $\pm$ 0.005} & -0.0027 $\pm$ 0.001 &  -0.0422 $\pm$ 0.005 & 0.0150 $\pm$ 0.010 & -0.0320 $\pm$ 0.000 & \cellcolor{red!10}\textbf{0.0953 $\pm$ 0.043} & \textcolor{red}{ $\uparrow$ 31.09\%} \\ 
& ACC & 0.5016 $\pm$ 0.000 & 0.5016 $\pm$ 0.000 & \cellcolor{blue!10}\underline{0.6348 $\pm$ 0.005} & 0.6171 $\pm$ 0.006 & 0.5010 $\pm$ 0.006 & 0.5872 $\pm$ 0.007 & 0.5020 $\pm$ 0.000 & \cellcolor{red!10}\textbf{0.6504 $\pm$ 0.044} & \textcolor{red}{ $\uparrow$ 2.46\%} \\
& NMI & 0.0429 $\pm$ 0.000 & 0.0429 $\pm$ 0.000 & \cellcolor{blue!10}\underline{0.0891 $\pm$ 0.009} & 0.0000 $\pm$ 0.000 & 0.0000 $\pm$ 0.000 & 0.0381 $\pm$ 0.006 & 0.0430 $\pm$ 0.000 & \cellcolor{red!10}\textbf{0.1057 $\pm$ 0.063} & \textcolor{red}{ $\uparrow$ 18.63\%} \\       
\midrule
\multicolumn{2}{c|}{Average Rank} & 5.000 & 5.139 & 5.056 & 4.444 & 4.417 & 5.667 & 5.167 & 1.111 & $-$ \\
\bottomrule
\end{tabular}}
\label{tb:clustering_full_with_gap}
\end{table*}

\section{Experiments}
\label{sct:exp}
To comprehensively validate the effectiveness and interpretability of our proposed framework, the experiments are designed to answer 
five key research questions:

\begin{itemize}
    \item \textbf{Q1: Does TagCC achieve state-of-the-art clustering performance?}
    Section~\ref{sct:performance} compares TagCC with traditional, popular, and state-of-the-art methods on multiple datasets and performs significance testing to confirm the reliability of performance gains.

    \item \textbf{Q2: Does semantic enhancement improve the rationality of representations, and to what extent?}
    Section~\ref{sct:ablation} conducts extensive ablation studies to isolate the effects of the semantic-aware transformation, the dual-branch architecture, and the CL mechanism. 

   \item \textbf{Q3: Does TagCC learn meaningful cluster structures?}
    Section~\ref{sct:visualization} visualizes the learned latent topology via t-SNE to qualitatively examine the structural compactness and separability of the clusters.

    \item \textbf{Q4: Does the optimization process converge reliably and preserve open-world  semantics?} Section~\ref{sct:convergence} analyzes the training dynamics, demonstrating stable convergence and the preservation of semantic consistency throughout the optimization.
    \item \textbf{Q5: Is TagCC robust to parameter and LLM variations?} Appendix~\ref{sct:paanlysis} investigates the parameter sensitivity of the proposed framework and evaluates its robustness when utilizing different LLMs for semantic anchor generation.

\end{itemize}

\subsection{Experimental Settings}\label{sct:exp_Set}

Experimental settings are briefly described below. Please refer to Appendix~\ref{sct:a2des} for the detailed experimental settings.

\paragraph{12 datasets} from various domains are utilized for the experiments, with the number of samples ranging from 0.02k to 48.84k. All the datasets are real public datasets collected from the UCI Machine Learning Repository~\cite{uci}, and their statistical information is shown in Table~\ref{tb:statistics}. Each dataset is preprocessed by removing samples with missing values. For all comparison methods, the number of clusters $k$ is set to $k^*$, i.e., the true number of clusters specified by the dataset labels. 


\begin{table*}[!t]
\centering
\caption{Ablation studies of TagCC. TCC removes the semantic-aware transformation. TLC and TTC denote single-branch deep clustering using only textual or tabular embeddings, respectively. The best results are highlight in \textbf{bold}. $\overline{\textit{AR}}$ reports Average Rank of each method. }
\label{tab:ablation}
\resizebox{1.0\textwidth}{!}{
\begin{tabular}{l|cccc|cccc|cccc}
\toprule
\multirow{2}{*}{Data} & \multicolumn{4}{c}{ARI} & \multicolumn{4}{c}{ACC} & \multicolumn{4}{c}{NMI} \\
\cmidrule(lr){2-5}\cmidrule(lr){6-9}\cmidrule(lr){10-13}
 & TTC & TLC & TCC & TagCC & TTC & TLC & TCC & TagCC & TTC & TLC & TCC & TagCC \\        
\midrule
SO & 0.41 $\pm$ 0.04 & 0.10 $\pm$ 0.02 & 0.38 $\pm$ 0.03 & \textbf{0.48 $\pm$ 0.03} & 0.55 $\pm$ 0.04 & 0.28 $\pm$ 0.03 & 0.51 $\pm$ 0.04 & \textbf{0.61 $\pm$ 0.03} & 0.66 $\pm$ 0.03 & 0.20 $\pm$ 0.04 & 0.53 $\pm$ 0.04 & \textbf{0.71 $\pm$ 0.01} \\
BR & 0.02 $\pm$ 0.05 & 0.02 $\pm$ 0.03 & 0.07 $\pm$ 0.03 & \textbf{0.15 $\pm$ 0.05} & 0.56 $\pm$ 0.06 & 0.59 $\pm$ 0.06 & 0.67 $\pm$ 0.03 & \textbf{0.72 $\pm$ 0.02} & 0.02 $\pm$ 0.03 & 0.01 $\pm$ 0.01 & 0.03 $\pm$ 0.02 & \textbf{0.07 $\pm$ 0.03} \\
CA & 0.05 $\pm$ 0.03 & 0.02 $\pm$ 0.02 & 0.06 $\pm$ 0.05 & \textbf{0.19 $\pm$ 0.03} & 0.37 $\pm$ 0.04 & 0.45 $\pm$ 0.09 & 0.46 $\pm$ 0.12 & \textbf{0.52 $\pm$ 0.07} & 0.07 $\pm$ 0.04 & 0.03 $\pm$ 0.02 & 0.04 $\pm$ 0.02 & \textbf{0.24 $\pm$ 0.07} \\
LE & 0.08 $\pm$ 0.12 & 0.12 $\pm$ 0.13 & 0.21 $\pm$ 0.18 & \textbf{0.41 $\pm$ 0.15} & 0.50 $\pm$ 0.09 & 0.57 $\pm$ 0.07 & 0.62 $\pm$ 0.11 & \textbf{0.70 $\pm$ 0.10} & 0.22 $\pm$ 0.18 & 0.16 $\pm$ 0.10 & 0.27 $\pm$ 0.13 & \textbf{0.42 $\pm$ 0.02} \\
MA & 0.38 $\pm$ 0.02 & 0.24 $\pm$ 0.09 & 0.40 $\pm$ 0.01 & \textbf{0.41 $\pm$ 0.02} & 0.81 $\pm$ 0.01 & 0.74 $\pm$ 0.05 & 0.80 $\pm$ 0.00 & \textbf{0.82 $\pm$ 0.01} & 0.32 $\pm$ 0.02 & 0.19 $\pm$ 0.07 & 0.31 $\pm$ 0.01 & \textbf{0.34 $\pm$ 0.01} \\
ZO & 0.68 $\pm$ 0.07 & 0.51 $\pm$ 0.03 & 0.80 $\pm$ 0.06 & \textbf{0.85 $\pm$ 0.09} & 0.75 $\pm$ 0.04 & 0.69 $\pm$ 0.04 & 0.84 $\pm$ 0.05 & \textbf{0.87 $\pm$ 0.07} & 0.80 $\pm$ 0.03 & 0.59 $\pm$ 0.06 & 0.85 $\pm$ 0.03 & \textbf{0.88 $\pm$ 0.03} \\
HA & 0.04 $\pm$ 0.03 & 0.00 $\pm$ 0.00 & 0.05 $\pm$ 0.02 & \textbf{0.09 $\pm$ 0.07} & 0.46 $\pm$ 0.04 & 0.00 $\pm$ 0.00 & 0.47 $\pm$ 0.04 & \textbf{0.52 $\pm$ 0.06} & 0.05 $\pm$ 0.02 & 0.00 $\pm$ 0.00 & 0.05 $\pm$ 0.02 & \textbf{0.10 $\pm$ 0.05} \\
FE & -0.00 $\pm$ 0.01 & \textbf{0.09 $\pm$ 0.01} & 0.09 $\pm$ 0.03 & 0.06 $\pm$ 0.07 & 0.54 $\pm$ 0.03 & 0.70 $\pm$ 0.02 & 0.75 $\pm$ 0.07 & \textbf{0.77 $\pm$ 0.10} & 0.00 $\pm$ 0.01 & \textbf{0.05 $\pm$ 0.01} & 0.04 $\pm$ 0.02 & 0.03 $\pm$ 0.03 \\
HE & 0.01 $\pm$ 0.02 & 0.03 $\pm$ 0.02 & 0.05 $\pm$ 0.02 & \textbf{0.16 $\pm$ 0.09} & 0.56 $\pm$ 0.04 & 0.59 $\pm$ 0.02 & 0.63 $\pm$ 0.03 & \textbf{0.71 $\pm$ 0.03} & 0.01 $\pm$ 0.01 & 0.02 $\pm$ 0.01 & 0.04 $\pm$ 0.02 & \textbf{0.11 $\pm$ 0.08} \\
NU & 0.05 $\pm$ 0.04 & 0.12 $\pm$ 0.08 & 0.27 $\pm$ 0.04 & \textbf{0.28 $\pm$ 0.03} & 0.32 $\pm$ 0.05 & 0.46 $\pm$ 0.06 & 0.47 $\pm$ 0.04 & \textbf{0.47 $\pm$ 0.04} & 0.06 $\pm$ 0.07 & 0.12 $\pm$ 0.08 & 0.31 $\pm$ 0.05 & \textbf{0.32 $\pm$ 0.04} \\
MU & 0.35 $\pm$ 0.19 & 0.21 $\pm$ 0.16 & 0.28 $\pm$ 0.12 & \textbf{0.53 $\pm$ 0.10} & 0.78 $\pm$ 0.10 & 0.70 $\pm$ 0.11 & 0.76 $\pm$ 0.07 & \textbf{0.86 $\pm$ 0.04} & 0.32 $\pm$ 0.17 & 0.16 $\pm$ 0.13 & 0.21 $\pm$ 0.09 & \textbf{0.47 $\pm$ 0.10} \\
AD & 0.04 $\pm$ 0.04 & 0.03 $\pm$ 0.04 & 0.04 $\pm$ 0.04 & \textbf{0.10 $\pm$ 0.04} & 0.58 $\pm$ 0.04 & 0.54 $\pm$ 0.04 & 0.60 $\pm$ 0.04 & \textbf{0.65 $\pm$ 0.04} & 0.04 $\pm$ 0.05 & 0.03 $\pm$ 0.05 & 0.04 $\pm$ 0.05 & \textbf{0.11 $\pm$ 0.06} \\
\midrule
$\overline{\textit{AR}}$ & 3.25 & 3.42 & 2.17 & 1.17 & 3.25 & 3.50 & 2.25 & 1.00 & 2.92 & 3.58 & 2.33 & 1.17 \\
\bottomrule
\end{tabular}}
\end{table*}

\vspace{-0.5cm}

\paragraph{Seven representative methods}are selected for comparison. $k$-means~\cite{kms} and DEC~\cite{DEC} are adopted as traditional and generative baselines. Contrastive Clustering (CC)~\cite{CC} and SCARF~\cite{scarf} are included to represent CL paradigms. Moreover, Quaternion Graph Representation Learning (QGRL)~\cite{QGRL}, Interpretable Deep Clustering (IDC)~\cite{IDC}, and TableDC~\cite{tabledc} are employed as the latest state-of-the-art (SOTA) counterparts. Each method is executed ten times, and the average performance is reported.

\paragraph{Three validity indices}including clustering ACCuracy (ACC)~\cite{ex1}, Normalized Mutual Information (NMI)~\cite{ex5}, and Adjusted Rand Index (ARI)~\cite{ex3,ex4}, are adopted to evaluate clustering performance, where larger values indicate
better performance. We also conduct the two-tailed Bonferroni-Dunn (BD) post-hoc test~\cite{significance-test} to statistically analyze the superiority of our TagCC.

\subsection{Clustering Performance}
\label{sct:performance}

Clustering performances of different methods across ACC, NMI, and ARI indices are reported in Table \ref{tb:clustering_full_with_gap}. The best and second-best results on each dataset are highlighted in \textbf{bold} and \underline{underlined}, respectively. $\Delta$ denotes the relative performance gain of the best method over the second-best, calculated as $(1st - 2nd) / 2nd \times 100\%$.

The observations include the following four aspects: 1) TagCC achieves the best performance on 11 out of 12 datasets in terms of ARI and ACC, and 10 datasets in terms of NMI. This consistent performance across all three metrics confirms its effectiveness compared to its counterparts; 2) On the BR and AD datasets, TagCC is not obviously better than the second-best method. However, the second-best method varies across different datasets, highlighting the robustness of TagCC against varying data distributions; 3) Although QGRL obtains a higher ACC on the CA dataset, its NMI and ARI scores are extremely low. This is because QGRL tends to assign most samples to the majority class, whereas TagCC avoids such degeneracy by anchoring tabular feature to distinct prototypes, prevents the representations from collapsing into a single cluster; 4) On datasets with relatively limited sample sizes, e.g., LE, ZO, HA, and FE, TagCC maintains a clear performance advantage over the competing methods, validating the effectiveness of semantic enhancement when statistical information is sparse. 

Furthermore, significance testing via the two-tailed Bonferroni-Dunn post-hoc test confirms that TagCC yields statistically significant improvements over all competing counterparts at the 95\% confidence level. For detailed test statistics and the Critical
Difference (CD) diagram, please refer to Appendix~\ref{sct:sig_study}.


\subsection{Ablation Studies}
\label{sct:ablation}

To demonstrate the effectiveness of the core components of TagCC, several ablated variants are compared as shown in Table~\ref{tab:ablation}. The best results are highlight in \textbf{bold}. To evaluate the semantic-aware transformation, TagCC is modified into TCC, which replaces the LLM-based semantic generation with concatenation-based serialization. Furthermore, to highlight the necessity of the dual-branch contrastive alignment, the TCC variant is decoupled into the single-branch llm-only TLC and table-only TTC models by conducting deep clustering exclusively on the isolated textual descriptions and tabular representations, respectively. 

As shown in Table~\ref{tab:ablation}, TagCC consistently outperforms the TCC variant on 11 out of 12 datasets, verifying that the proposed semantic-aware transformation effectively unveils open-world knowledge compared to the verbose and generic descriptions yielded by concatenation-based serialization. Furthermore, TCC generally surpasses the single-branch baselines, i.e., TLC and TTC, on 8 out of 12 datasets, demonstrating that the dual-branch contrastive mechanism successfully harmonizes heterogeneous modalities to learn robust representations even without optimal semantic guidance. In contrast, TLC and TTC exhibit comparable but suboptimal performance, highlighting their inherent deficiencies, i.e., the former lacks structural constraints, while the latter lacks explicit semantic guidance, relying instead on brittle statistical co-occurrence to infer relationships. Moreover, the results for TLC on the HA dataset are not reported because the concatenation-based serialization of abstract codes yields indistinguishable textual representations, resulting in clustering failure. This phenomenon further underscores the critical importance of our semantic-aware mechanism in ensuring semantic discriminability.

\subsection{Qualitative Results}
\label{sct:visualization}

\begin{figure}[!t]	
\centerline{\includegraphics[width=3.2in]{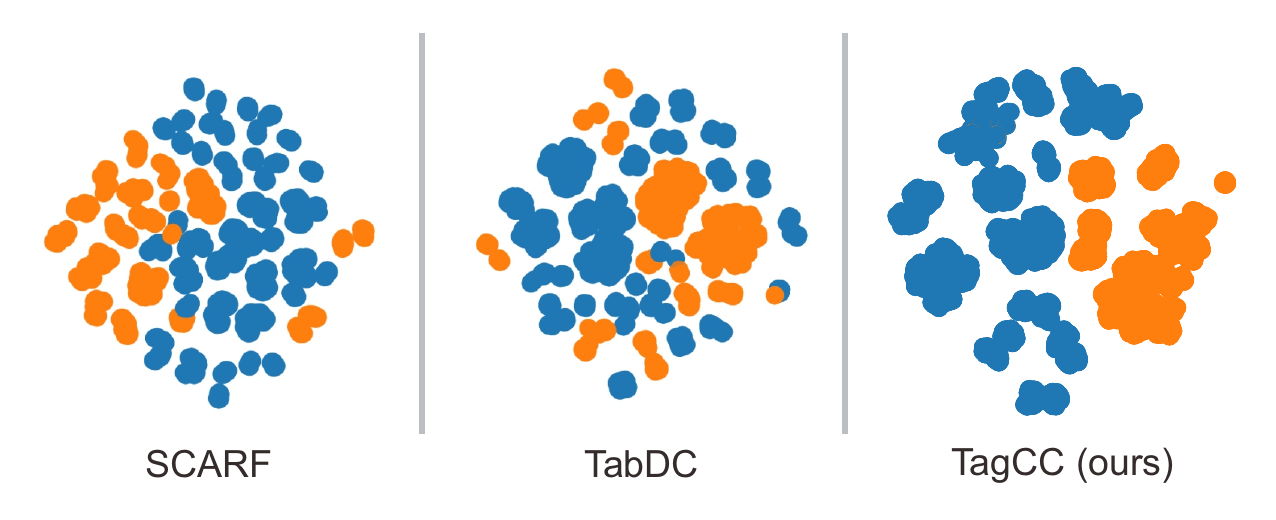}}
\caption{t-SNE visualization of representations learned by SCARF, TabDC, and TagCC on the MU dataset. Orange and blue points mark the samples with the ``true'' cluster labels.}
\label{fig:tsne}	
\end{figure}

To demonstrate the cluster discrimination ability of the proposed TagCC method, we use SCARF, TabDC, and TagCC to extract latent feature representations for the MU dataset, which are then projected into a 2-D space for visualization via t-SNE~\cite{r1visual}, as shown in Figure~\ref{fig:tsne}. Orange and blue points mark the samples with the ``true'' cluster labels. This visualizes how well each method captures the distinguishable cluster distributions and preserves the intrinsic local structure for intuitive comparison.

It can be seen from Figure~\ref{fig:tsne} that TagCC obviously has better cluster discrimination ability compared to the counterparts. SCARF and TabDC exhibit blurred boundaries and considerable overlap between different categories; conversely, TagCC forms highly compact and well-separated clusters with clear margins. This indicates that TagCC performs representation learning more clustering-friendly, leveraging semantic constraints to disentangle the feature space and better suit the exploration of the underlying $k^*$ clusters.

\subsection{Convergence Analysis}
\label{sct:convergence}

Figure~\ref{fig:convergence} illustrates the convergence curves of TagCC on the CA (left) and MM (right) datasets. To simultaneously visualize the high-magnitude alignment loss and the low-magnitude prototype loss, we employ a broken-axis scale in the visualization. The training process is explicitly divided into two phases: the initial 50 epochs (indicated to the left of the vertical gray dashed line) constitute the semantic warm-up stage, followed by the structure refinement stage. Specifically, in the semantic warm-up stage, the model is optimized exclusively by the alignment loss $\mathcal{L}_{align}$ to capture global semantic dependencies. To subsequently induce compact cluster structures within this semantic manifold, the objective in the structure refinement stage, the objective transitions transitions to a sum $\mathcal{L}_{total} = \mathcal{L}_{align} + \mathcal{L}_{proto}$.

During the semantic warm-up stage, the alignment loss $\mathcal{L}_{align}$ decreases rapidly, establishing a strong semantic consistency between tabular representations and the semantic anchors. A critical observation is the behavior of the prototype loss $\mathcal{L}_{proto}$ upon its activation instructure refinement stage. As shown in the bottom section of the plots, $\mathcal{L}_{proto}$ initializes at a remarkably low magnitude rather than a high random value. This empirically verifies that our semantic warm-up strategy has successfully pre-structured the latent space, providing a high-quality, low-entropy initialization for the clustering module.

\begin{figure}[!t]	
\centerline{\includegraphics[width=3.3in]{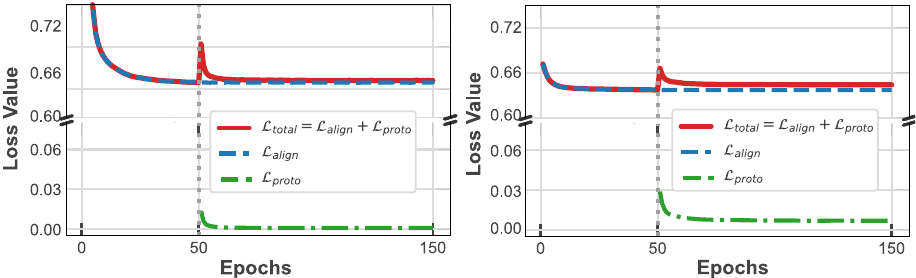}}
\caption{Convergence analysis on CA (left) and MA (right). The vertical dashed line at epoch 50 separates the semantic warm-up stage from the structure refinement stage.}
\label{fig:convergence}	
\end{figure}

In the structure refinement stage, $\mathcal{L}_{align}$ transitions from rapid descent to a stable plateau. This stability indicates that the alignment loss functions as a semantic regularizer, anchoring the learnable representations to the semantic manifold to prevent catastrophic forgetting while the model adapts to the data distribution. Concurrently, the continuous decrease in $\mathcal{L}_{proto}$ corresponds to the fine-grained refinement of decision boundaries, specifically resolving ambiguity for hard samples near cluster margins. The smooth trajectory of the total loss demonstrates the robustness of our joint optimization scheme, successfully balancing semantic preservation with structural adaptation.

\section{Concluding Remark}
\label{sct:cr}

In this paper, we proposed TagCC, a novel framework designed to bridge the gap between statistical tabular learning and open-world semantics. To address the data sparsity inherent in discrete tabular data and the lack of discriminative focus in concatenation-based serialization, TagCC introduces a semantic anchoring paradigm. Specifically, by leveraging LLMs to distill intrinsic data semantics into structured and discriminative textual anchors, and explicitly aligning tabular representations with these anchors via CL, our framework effectively enriches tabular representations with open-world semantics, while mitigating the risk of overfitting to isolated statistical correlations. Moreover, this alignment is jointly optimized with a clustering objective, which yields partitions that are both semantically coherent and structurally compact. Extensive experiments on multiple datasets have demonstrated that TagCC consistently outperforms its counterparts. In the future, it is promising to extend TagCC into more complex scenarios, e.g., handling tabular data with missing values, adapting it to non-stationary data distributions, or integrating privacy-preserving mechanisms for sensitive domains.

\bibliography{TaCC}
\bibliographystyle{icml2026}

\appendix

\onecolumn

\end{document}